\newtheorem{theorem}{Theorem}
\newtheorem{definition}{Definition}
\newtheorem{lemma}{Lemma}
\begin{document}

\begin{frontmatter}

\title{A convex method for classification of groups of examples}


\author{Dori Peleg \textsuperscript{a} \fnref{myfootnote} \corref{mycorrespondingauthor}}
\address{\textsuperscript{a} Medtronic, Hermon building, Yokneam, Israel 2069204}
\fntext[myfootnote]{Email Address: dori.peleg@medtronic.com}





\begin{abstract}
There are many applications where it important to perform well on a set of examples as opposed to individual examples. For example in image or video classification the question is does an object appear somewhere in the image or video while there are several candidates of the object per image or video. In this context, it is not important what is the performance per candidate. Instead the performance per group is the ultimate objective. 

For such problems one popular approach assumes weak supervision where labels exist for the entire group and then multiple instance learning is utilized. Another approach is to optimize per candidate, assuming each candidate is labeled, in the belief that this will achieve good performance per group. 

We will show that better results can be achieved if we offer a new methodology which synthesizes the aforementioned approaches and directly optimizes for the final optimization objective while consisting of a convex optimization problem which solves the global optimization problem. 
The benefit of grouping examples is demonstrated on an image classification task for detecting polyps in images from capsule endoscopy of the colon. The algorithm was designed to efficiently handle hundreds of millions of examples. Furthermore, modifications to the penalty function of the standard SVM algorithm, have proven to significantly improve performance in our test case.
\end{abstract}

\begin{keyword}
Multiple instance learning, support vector machine, image classification, object recognition, convex optimization, deep learning
\end{keyword}

\end{frontmatter}


\section{Introduction}

There are many applications where there is a meaning to groups of examples. In image classification the question is if there is an object somewhere in an image\footnote{This is different from object detection where the objective is to detect each instance of the object.}. The examples may be candidates of the object and the group is all the candidates in an image. In this context, it is enough for one candidate to be categorized as the desired object and the decision per image is that it contains the object. Only if none of the candidates don't contain the object, then it is decided that the image does not contain the object. The example can be expanded from a single image to a video. For example, does this video contain a certain object (e.g. a specific person).

These types of problems can be dealt in three possible approaches. The third is the one proposed in this paper and is a synthesis of the first two.

\begin{enumerate}
\item Multiple Instance Learning (MIL)- Optimization is on the group level, with only group labels.
\item Optimization is on the candidate level, with candidate labels.
\item Optimization is on the group level, with candidate labels.
\end{enumerate}

The setup of targeting the group level has been the subject of extensive research in MIL (\cite{dietterich1997solving}, \cite{maron1998framework}, \cite{wang2000solving}, \cite{gartner2002multi}, \cite{zhang2002dd}, \cite{andrews2003support}, \cite{weidmann2003two}, \cite{xu2003statistical}, \cite{frank2003applying}, \cite{auer2004boosting}, \cite{zhou2004multi}, \cite{chen2006miles}, \cite{nowak2006sampling}, \cite{kriegel2006approach}, \cite{zhou2007solving}, \cite{kwok2007marginalized}, \cite{bunescu2007multiple}, \cite{zhou2007multi}, \cite{mangasarian2008multiple}, \cite{zhang2009multi}, \cite{zhou2009multi}, \cite{li2009convex}, \cite{vezhnevets2010towards}, \cite{foulds2010review}, \cite{amores2013multiple}, \cite{ray2014multiple}, \cite{herrera2016multiple}, \cite{herrera2016multiple}). In the MIL framework the assumption is that only weak supervision is available. Namely, the label of each group (termed as bag in MIL) is available and the labels of each candidate (termed as instance in MIL) is unknown. The advantage of such an assumption is that one can utilize data with very little supervision effort. For example in images, one needs to determine if the object is in the image or not, without specifying where.

The second approach is to attain labeled candidates and use standard classification algorithms while ignoring the grouping action. It is the premise of this paper that if the ultimate objective function pertains to groups, then directly optimizing on the group level provides superior performance compared to optimizing on the candidate level and then grouping the results. This approach will be compared to the rest empirically in Section \ref{Section:Experiments}.

The third approach is \textbf{not} a subset of the MIL framework since it requires more information. MIL algorithms attempt to solve non-convex optimization problems aimed to guess which candidates (termed instances) of the positive groups (termed as bags) are positive. In this work this information is available and is used to solve a convex optimization problem. Thus inherently the performance of the proposed approach will be at least as good as any MIL algorithm. Thus a comparison to any MIL algorithm is not fair for them.

The question that one must ask for each dataset, where the group answer is important, is how much better can the performance increase with strong supervision and what is the annotation cost. 

For detecting objects in images, in the 'positive' images which contain at least one example of the desired object, one can mark with a trivial GUI either a bounding box or the exact perimeter of the desired object . A candidate is considered 'positive' if it overlaps enough with the mask provided by the human annotator. For the 'negative' images, no annotation is needed.

Such an annotation provides candidate level labels for any algorithm automatically and without a need to repeat human labor for different algorithms or parameters. The annotation typically requires only a few seconds per image. For datasets with relatively few positive examples the annotation cost is negligible and for large scale datasets mechanical turk or other low cost labor can be used.

The test case for this paper will be image classification. In image classification the methodology with non end-to-end learning techniques consists of four steps:
\begin{enumerate}
	\item Region proposal- A group of candidate masks of the desired object are calculated. The purpose of this phase is to have at least one mask which fits well with the desired object, if it appears in the image. This can be at the cost of many candidates which do not mark the desired object.
	\item Feature generation- Features are calculated to help differentiate between the desired object and the rest.
	\item Classifier- A classifier is trained based on the features of the previous step to provide a score per candidate which differentiates between candidates of the desired object and the rest.
	\item Score per image- The probability the image contains the desired object is based on the maximal score of the candidates of the image.

\end{enumerate}

With deep learning (\cite{bengio2009learning}, \cite{hinton2006fast}, \cite{goodfellow2016deep}) steps 2 and 3 are done together and with R-CNN (\cite{girshick2014rich}), fast R-CNN (\cite{girshick2015fast}) and faster R-CNN (\cite{ren2015faster}), all the steps are done together. 

Note that detecting objects that may consist of a very small portion of the image is difficult for such techniques since the majority of the image is of the same distribution in both images with the desired object and without. Typically some cropping should be done and a grouping per image of the results should be done. This brings deep learning techniques back into the setting proposed by this paper.

Deep learning has cemented itself as the state-of-the-art and first choice in visual recognition tasks. However in settings with either very few examples overall (the order of tens-hundreds) or unbalanced data where one class has very few examples, then deep learning may not be the only choice. 

The focus of this paper is on binary classification problems, but it can easily be expanded to multi-class problems similarly to the techniques used for the SVM algorithm (\cite{boser1992training}, \cite{andrew2000introduction}).

Section \ref{Section:Algorithm} gradually introduces the proposed algorithm starting from the primal form of the SVM algorithm. In Section \ref{Section:Experiments} we present a set of comparisons with the SVM algorithm for the challenge of polyp detection in capsule endoscopy. To finalize, Section \ref{Section:Summary} describes the main conclusions and outlines future work.

\section{Algorithm}\label{Section:Algorithm}

The starting point of the development of the algorithm was the primal form of the linear SVM algorithm. The following modifications were performed:
\begin{enumerate}
	\item Balancing positive and negative examples.	
  \item Replacing the hinge-loss function with a smoothed function for the training errors.
  \item Replacing the squared norm 2 with the Huber penalty for the weights of the classifier.
  \item Grouping examples.	
\end{enumerate}

Steps $1$-$3$ are not novel and should be considered as a suggested best practice which contribute to the performance in a problem with very few positive examples and virtually an unlimited number of negative examples. The novelty of the paper is in step $4$ where we suggest a different way to optimize the grouping problem which is different from the MI-SVM \cite{andrews2003support} algorithm and may provide a significant increase in performance.

\subsection{Original SVM classifier}
The standard linear SVM (primal) optimization problem is:

\begin{equation}
\begin{array}{ll}\label{opt_prob:primal_original}
\mathrm{minimize} & w^Tw+C\mathbf{1}^T \xi \\
\mbox{subject to} & \xi \geq \mathbf{1} - Y(X w + b\mathbf{1}) \\
                  & \xi \geq 0,                                                                                   
\end{array}
\end{equation}

where the variables are $w\in \mathbb{R}^{d}$, $b\in \mathbb{R}$, $\xi\in \mathbb{R}^{n}$, $\mathbf{1}$ is a column vector of ones with appropriate dimensions, $X\in\mathbb{R}^{n\times d}$ is the feature matrix and $Y\in\mathbb{R}^{n\times n}$ is a diagonal matrix with the training labels on the diagonal and zero outside of the diagonal. The number of training examples is $n$ and the number of features is $d$.

This is a Quadratic Programing (QP) optimization problem (\cite{boyd2004convex}). The objective function has two terms. The first is the penalty on the squared norm of $w$. The second is the hinge loss on the soft-margins multiplied by the labels. The hinge loss of variable $t$, termed $h(t)$ is the maximum of $1-t$ and $0$. In Figure \ref{fig:hinge}, the hinge loss is depicted.
\begin{figure}[h!]
\centering
\includegraphics[width = 0.7\textwidth]{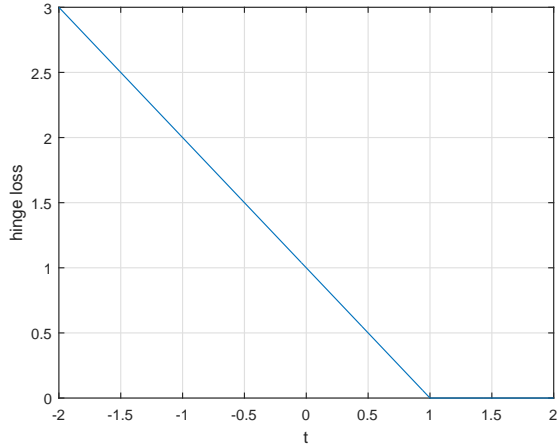}
\caption{Hinge loss function. For a training example with a feature vector $x^i$ and label $y^i$, the input to the hinge loss is $t=y^i (w^T x^i+b)$.}
\label{fig:hinge}
\end{figure}

\subsection{Balancing positive and negative examples}

In highly unbalanced datasets, it is critical not to provide equal weights to both classes as is done in Problem \ref{opt_prob:primal_original}. If this step is not performed, the classifier will typically predict the class with the highest a priori probability.
In order to normalize Problem \ref{opt_prob:primal_original}, so that the search for the optimal hyperparameter will be independent of the number of examples and the number of features, and in order to balance between the number of positive $n_+$ and negative examples $n_-$, the following normalized SVM problem is defined.

\begin{equation}
\begin{array}{ll}\label{opt_prob:primal_normalized}
\mathrm{minimize} & \frac{1-\lambda}{d} w^T w+\frac{\lambda}{n_+}\mathbf{1}^T \xi_+\frac{\lambda}{n_-}\mathbf{1}^T \xi_- \\
\mbox{subject to} & \xi \geq \mathbf{1} - Y(X w + b\mathbf{1}) \\
                  & \xi \geq 0,                                                                                   
\end{array}
\end{equation}

where $\xi_+\in \mathbb{R}^{n_+}$ and and $\xi_-\in \mathbb{R}^{n_-}$ are the vectors of slack variables of the positive and negatives examples respectively and $0\leq \lambda \leq 1$.

\subsection{Shifting from constrained optimization to unconstrained}

The SVM algorithm consists typically of solving the dual optimization problem to Problem \ref{opt_prob:primal_original}. This is done since typically a non-linear classifier provides the best performance for a classification problem. Note that \cite{chapelle2007training} pointed out that thanks to representer theorem (\cite{smola1998learning}) one can use the primal formulation for a non-linear kernel classifier. Furthermore \cite{chapelle2007training} explained why solving the primal formulation converges faster to the solution than the dual.

However, whether one solves the dual optimization problem or uses the primal with a kernel classifier, the number of variables, $n+1$, is related to the number of examples. In the setting where there is a large number of training examples $n$, this becomes problematic not only due to the computational complexity, but even more so due to memory constraints. These approaches consist of utilizing the kernel matrix which is $[n \times n]$. This typically limits the number of training examples to the order of thousands.

In this paper we would like to utilize a large number of examples instead of selecting a subgroup which is aimed at representing the entire training database. In Section \ref{Subsection: Results} the importance of utilizing a large number of examples is demonstrated.

For problems where $n$ or just one of $n_+$ or $n_-$ is extremely large, the number of variables to optimize in the formulation with constraints and slack variables is extremely large. Thus there is no option to add slack variables and we need to solve an optimization problem without constraints. 

Previous work has introduced several variations on the SVM optimization problem which involves solving an unconstrained optimization problem (\cite{FungMangasarian:04}, \cite{Mangasarian:06}, \cite{chapelle2007training}, \cite{franc2008optimized}). Similarly, we propose to optimize the hinge loss directly, without slack variables as detailed in Problem \ref{opt_prob:primal_hinge}.

\begin{equation}
\begin{array}{ll}\label{opt_prob:primal_hinge}
\mathrm{minimize} & \frac{1-\lambda}{d} w^T w+\frac{\lambda}{n_+}\sum_{i\in i_+}{h(y^i (w^T x^i+b))} +\frac{\lambda}{n_-}\sum_{i\in i_-}{h(y^i (w^T x^i+b))} \\
,                                                                                   
\end{array}
\end{equation}

where $i_+$ and $i_-$ are the indices of the positive and negative training examples respectively. Thus the number of variables is $d+1$ and the only dependency on $n$ is in the number of elements in the summation term.

\subsection{The Huber penalty function}

A variant of the primal SVM algorithm (Problem \ref{opt_prob:primal_original}) is using norm $1$ of the variable $w$ instead of the norm $2$ squared. The advantage of using the norm $1$ is that the solution is typically sparse and it is less influenced by outliers \cite{boyd2004convex}.
On the other hand, if all the features are relevant, typically the squared norm $2$ provides better results.

A compromise between the two is the Huber function.

\begin{definition}\label{def: huber}
The Huber cost function
$h_\epsilon:\mathbb{R}\rightarrow\mathbb{R}_+$ is  defined as,

\[ \begin{array}{cc}
   h_\epsilon
	(t)= \left\{\begin{array}{ll} 
	              \frac{t^2}{2\epsilon}  & |t|\leq \epsilon  \\ 
								|t|-\frac{\epsilon}{2} & |t|> \epsilon 
	 \end{array}\right. \\
\end{array} \]
\end{definition}

The Huber function is a continuous and differentiable function. It is depicted in Figure \ref{fig:Huber}.

\begin{figure}[h!]
\centering
\includegraphics[width = 0.7\textwidth]{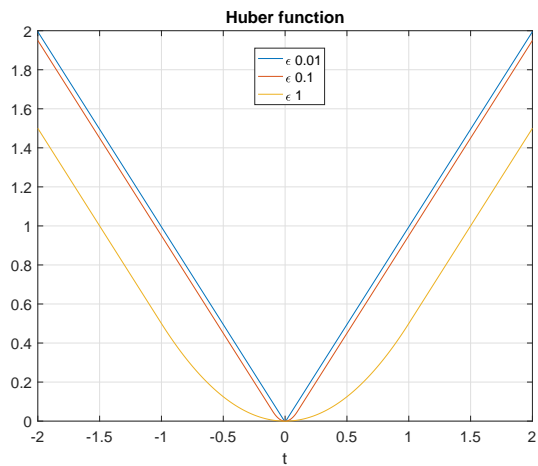}
\caption{Huber cost function, for several values of $\epsilon$. It is equal to the squared norm 2 for $\epsilon\rightarrow\inf$ and norm 1 for $\epsilon=0$.}
\label{fig:Huber}
\end{figure}

On the one hand, it doesn't 'force' the values to be exactly $0$ in the $\epsilon$ area around $0$. On the other hand it is less influenced by outliers as the squared norm $2$. Furthermore, if the norm $1$ penalty should be optimized, for small $\epsilon$, the Huber function is a differentiable approximation of the norm $1$ penalty.

Due to the flexibility of the Huber cost function (two penalty functions, norm 1 and squared norm 2, and all variants in between), it has been used in this algorithm instead of the squared norm 2 of $w$ in Problem \ref{opt_prob:primal_hinge}.

\subsection{Smoothed hinge-loss function}

The hinge-loss is a popular penalty function for training errors. However it has two shortcomings:
\begin{enumerate}
	\item It is not a differentiable function.
	\item It may be providing high penalties for examples which are not training errors and which are in the margin.
\end{enumerate}

The first point is not critical, since sub-gradients can be utilized. However it is preferable to use a differentiable alternative if this doesn't deteriorate performance.

The second point may seem problematic since typically a cross validation process is performed which multiplies the hinge-loss by a positive scalar hyper-parameter $C$ which in effect changes the slope of the hinge-loss and selects the best trade-off between training error and generalization for each dataset (see Problem \ref{opt_prob:primal_original}).

However this does not change the ratio of the penalty of a training error and an example within the margin. For example, the penalty for a training error with a soft-margin of $-\frac{1}{2}$ is equivalent to $3$ examples with a soft-margin of $\frac{1}{2}$ within the margin and which are not training errors. Note that this ratio is not affected by the hyper-parameter $C$. In order to modify this ratio the slope of the loss function must be different in the margin without training error (the $[0,1]$ region) compared to the training error region (the negative orthant).

The loss function in Definition \ref{def: smoothed hinge loss} answers this need. Note that for example if $\delta$ is equal to $\frac{1}{2}$, the aforementioned ratio is $8$ instead of $3$.  

\begin{definition}\label{def: smoothed hinge loss}
The approximation to the hinge-loss is:
\[ \begin{array}{cc}
   L_\delta(t)= \left\{\begin{array}{ll} 
								0  & t\geq 1  \\
	              \frac{(1-t)^2}{4\delta} & 1-2\delta\leq t<1  \\ 
								1-t-\delta & t<1-2\delta
	 \end{array}\right. \\
\end{array} \]
\end{definition}

\begin{lemma}\label{lemma:smooth_hinge_loss}
The function in Definition \ref{def: smoothed hinge loss} is a once differentiable convex function.
\end{lemma}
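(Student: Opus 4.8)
The plan is to establish the two assertions in turn: first that $L_\delta$ is continuously differentiable ($C^1$), and then that it is convex. Throughout I assume $\delta>0$, which is needed merely for the middle branch $\tfrac{(1-t)^2}{4\delta}$ to be well defined. Since $L_\delta$ is manifestly smooth on each of the three open intervals $(-\infty,1-2\delta)$, $(1-2\delta,1)$ and $(1,\infty)$, everything reduces to examining behaviour at the two junction points $t=1-2\delta$ and $t=1$.

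\textbf{Differentiability.} First I would check continuity of $L_\delta$ itself at the junctions: at $t=1$ both the middle branch and the top branch $0$ evaluate to $0$, and at $t=1-2\delta$ the middle branch gives $(2\delta)^2/(4\delta)=\delta$ while the linear branch $1-t-\delta$ also gives $\delta$. Next I would differentiate branchwise, obtaining
\[
L_\delta'(t)=\begin{cases} 0 & t>1 \\ \dfrac{t-1}{2\delta} & 1-2\delta<t<1 \\ -1 & t<1-2\delta, \end{cases}
\]
and check the one-sided limits at the junctions: as $t\to 1^-$ the middle-branch derivative $(t-1)/(2\delta)\to 0$, matching the top-branch value $0$; and at $t=1-2\delta$ the middle-branch derivative equals $((1-2\delta)-1)/(2\delta)=-1$, matching the linear-branch derivative $-1$. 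Hence the one-sided derivatives agree at both junctions, so the displayed formula is the everywhere-defined, continuous derivative of $L_\delta$, giving $L_\delta\in C^1$.

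\textbf{Convexity.} I would use the standard criterion that a differentiable function on $\mathbb{R}$ is convex iff its derivative is nondecreasing. From the formula above, $L_\delta'$ equals $-1$ on $(-\infty,1-2\delta]$, is affine with positive slope $1/(2\delta)$ on $[1-2\delta,1]$ increasing from $-1$ up to $0$, and equals $0$ on $[1,\infty)$; being continuous and nondecreasing on each piece with the pieces joined monotonically, $L_\delta'$ is nondecreasing on all of $\mathbb{R}$, so $L_\delta$ is convex. (Equivalently one may note $L_\delta''\ge 0$ on each open branch — it is $0$, $1/(2\delta)$, $0$ respectively — and invoke the fact that a $C^1$ function convex on each of finitely many closed intervals covering $\mathbb{R}$ is convex.)

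\textbf{Anticipated difficulty.} There is essentially no hard step here: the argument is bookkeeping at the two breakpoints, and the only point requiring care is the standing hypothesis $\delta>0$, which is what makes the quadratic branch strictly convex and the derivative formula valid; as $\delta\to 0$ one recovers the ordinary hinge loss, still convex but no longer differentiable at $t=1$. As a remark, $L_\delta$ is in fact the Moreau envelope of the hinge loss, which would deliver both properties simultaneously, but the direct verification above is fully self-contained.
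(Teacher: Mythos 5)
Your proof is correct, and the differentiability half coincides with the paper's: both compute the branchwise derivative and match the one-sided values $0$ and $-1$ at the junctions $t=1$ and $t=1-2\delta$ (you additionally verify continuity of $L_\delta$ itself at the breakpoints, which the paper leaves implicit but which is needed for the derivative computation to be meaningful). Where you genuinely diverge is the convexity argument. The paper verifies the first-order condition $L_\delta(t)\geq L_\delta(t_0)+L_\delta'(t_0)(t-t_0)$ directly, via an exhaustive six-case analysis over all ordered pairs of distinct branches containing $t$ and $t_0$; you instead invoke the equivalence, for $C^1$ functions on $\mathbb{R}$, between convexity and monotonicity of the derivative, and simply observe that $L_\delta'$ rises from $-1$ through the affine middle piece of slope $1/(2\delta)$ to $0$ and is therefore nondecreasing globally. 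Your route is shorter, less error-prone, and makes the structural reason for convexity transparent (the derivative is a continuous nondecreasing ramp), at the cost of citing one standard characterization rather than working purely from the supporting-hyperplane definition as the paper does; the paper's case analysis is self-contained but essentially re-derives that characterization by hand for this particular function. Your closing remark that $L_\delta$ is the Moreau envelope of the hinge loss (with proximal parameter $2\delta$) is also correct and would yield both conclusions at once, though neither you nor the paper needs it.
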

 
\begin{proof}
The first derivative of the function in Definition \ref{def: smoothed hinge loss} is:
\[ \begin{array}{cc}
   L_\delta(t)= \left\{\begin{array}{ll} 
								0  & t\geq 1  \\
	              \frac{t-1}{2\delta} & 1-2\delta\leq t<1  \\ 
								-1 & t<1-2\delta
	 \end{array}\right. \\

\end{array} \]
The values of function $\frac{t-1}{2\delta}$ at $t=1,1-2\delta$ are $0,-1$ respectively. Thus the first derivative is continuous.

Convexity will be proven by showing that for every $t,t_0\in\mathbb{R}$, the inequality $L_\delta(t)\geq L_\delta(t_0)+\nabla L_\delta(t_0)(t-t_0)$ holds for $\delta>0$ (see \cite{boyd2004convex}).

For all pairs of $t$ and $t_0$ which are in the same region, this holds trivially because each of the functions is convex in itself. We will prove that for all the other combinations of $t$ and $t_0$  the inequality holds.

\begin{enumerate}
	\item $t\geq1$, $1-2\delta\leq t_0<1$: Need to prove- $0\geq \frac{(1-t_0)^2}{4\delta}+\frac{t_0-1}{2\delta}(t-t_0)=\left(\frac{t_0-1}{4\delta}\right)(2t-t_0-1)$. The first element is non-positive since $t_0<1$. The second element is non-negative since it is greater or equal to $2\delta$ due to $1-2\delta\leq t_0$.
	\item $1-2\delta\leq t<1$, $t_0\geq1$: Need to prove- $\frac{(1-t)^2}{4\delta}\geq 0$. This is always true for $\delta>0$.
	\item $t\geq1$, $t_0<1-2\delta$: Need to prove- $0\geq 1-t_0-\delta -1\times (t-t_0)=1-\delta-t\leq 1-\delta-1=-\delta$. Since $t\geq 1$ and $\delta>0$ this inequality is true.
	\item $t<1-2\delta$, $t_0\geq1$: Need to prove- $1-t-\delta\geq0$. $1-t-\delta\geq 1-\delta+2\delta-1=\delta\geq0$ since $t<1-2\delta$.
	\item $1-2\delta\leq t<1$, $t_0<1-2\delta$: Need to prove- $\frac{(1-t)^2}{4\delta}\geq 1-t_0-\delta-1(t-t_0)=1-t_0-\delta-t+t_0\geq 1-\delta+2\delta-1=\delta\geq 0$ since $t\geq1-2\delta$. 
	\item $t<1-2\delta$, $1-2\delta\leq t_0<1$:	Need to prove- $1-t-\delta\geq \frac{(1-t_0)^2}{4\delta}+\left(\frac{t_0-1}{2\delta} \right)(t-t_0)$. $1-t-\delta\leq\delta$ since $t<1-2\delta$. $\frac{(1-t_0)^2}{4\delta}+\left(\frac{t_0-1}{2\delta} \right)(t-t_0)=\left(\frac{t_0-1}{4\delta} \right) \left( 2t-t_0-1\right)$. $\frac{t_0-1}{4\delta}\leq 0$ since $t_0\leq1$. $2t-t_0-1\leq-2\delta\leq 0$ since $t<1-2\delta$ and $t_0\geq 1-2\delta$.
\end{enumerate}
\end{proof}

Note that the hinge-loss is a private case of the smoothed hinge loss with $\delta=0$. Thus the performance with this loss-function should be at least as good as with the hinge-loss with the possible additional hyper-parameter selection of $\delta$. Furthermore, with small $\delta$, this can be simply used as a differentiable hinge-loss with minimal change in performance.

Past work on differentiable approximations of the hinge loss placed the quadratic region in the positive domain (\cite{chapelle2007training}) and kept the negative domain the same as in the hinge loss. This provides an even larger emphasis on training examples within the margin which are not errors compared to training errors. In Section \ref{Subsection: Results} we demonstrate that utilizing a training error loss which provides less emphasis on examples in the margin which are not training errors can provide better performance compared to the trade-off used in the hinge loss.

\begin{figure}[h!]
\centering
\includegraphics[width = 0.7\textwidth]{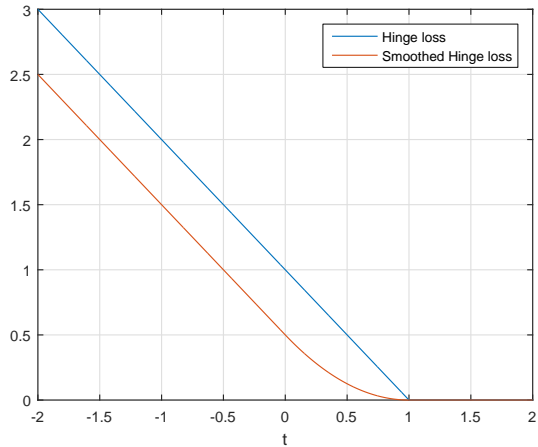}
\caption{Smoothed Hinge loss function, with $\delta=0.5$}
\label{fig:Smoothedhinge}
\end{figure}

Thus we propose to solve Problem \ref{opt_prob:primal_huber_smoothed_hinge} if the ultimate objective is per example and not per groups of examples.

\begin{equation}
\begin{array}{ll}\label{opt_prob:primal_huber_smoothed_hinge}
\mathrm{minimize} & \frac{1-\lambda}{d} \sum_{j=1,...d} h_\epsilon(w_j)+\frac{\lambda}{n_+}\sum_{i\in i_+}{L_\delta(y^i (w^T x^i+b))} +\frac{\lambda}{n_-}\sum_{i\in i_-}{L_\delta(y^i (w^T x^i+b))} \\
.                                                                              
\end{array}
\end{equation}

\begin{theorem} \label{theorem:NonGroupOpt}
Problem \ref{opt_prob:primal_huber_smoothed_hinge} is a convex optimization problem.
\end{theorem}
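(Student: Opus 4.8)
The plan is to exhibit the objective of Problem \ref{opt_prob:primal_huber_smoothed_hinge} as a nonnegative weighted sum of convex functions composed with affine maps, over the (trivially convex) domain $\mathbb{R}^{d+1}$, and then invoke the standard calculus of convex functions (\cite{boyd2004convex}): convexity is preserved under (i) composition with an affine map, and (ii) nonnegative linear combinations. Since Problem \ref{opt_prob:primal_huber_smoothed_hinge} has no constraints, the feasible set is all of $\mathbb{R}^{d+1}$, so once the objective is shown convex we are done.

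First I would record that the weights are nonnegative: from $0\le\lambda\le 1$ we get $\frac{1-\lambda}{d}\ge 0$, $\frac{\lambda}{n_+}\ge 0$, and $\frac{\lambda}{n_-}\ge 0$. Next I would handle the loss terms: the map $(w,b)\mapsto y^i(w^Tx^i+b)$ is affine in $(w,b)$ for each fixed training pair $(x^i,y^i)$, and $L_\delta$ is convex by Lemma \ref{lemma:smooth_hinge_loss}; hence each summand $L_\delta(y^i(w^Tx^i+b))$ is convex, and so are the two sums $\sum_{i\in i_+}$ and $\sum_{i\in i_-}$. Similarly, each coordinate projection $(w,b)\mapsto w_j$ is linear, so each $h_\epsilon(w_j)$ is convex provided $h_\epsilon$ is convex, and then $\sum_{j=1,\dots,d}h_\epsilon(w_j)$ is convex.

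The one point that still needs justification — and the main (albeit modest) obstacle — is the convexity of the Huber function $h_\epsilon$ from Definition \ref{def: huber}. I would establish this in one of two short ways. The cleanest is to write $h_\epsilon$ as a pointwise supremum of affine functions,
\[
h_\epsilon(t)=\sup_{|s|\le 1}\Bigl(st-\tfrac{\epsilon}{2}s^2\Bigr),
\]
which is convex as a supremum of affine (hence convex) functions of $t$; one then checks that the unconstrained optimum $s^*=t/\epsilon$ is feasible exactly when $|t|\le\epsilon$, recovering the two branches of Definition \ref{def: huber}. Alternatively, one can argue directly as in Lemma \ref{lemma:smooth_hinge_loss}: $h_\epsilon$ is continuous, and its derivative $h_\epsilon'(t)=\mathrm{clip}_{[-1,1]}(t/\epsilon)$ is nondecreasing on $\mathbb{R}$, which implies convexity. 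Assembling these pieces, the objective of Problem \ref{opt_prob:primal_huber_smoothed_hinge} is a nonnegative combination of convex functions, hence convex, and the problem is a convex optimization problem. $\qed$
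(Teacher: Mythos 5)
Your proposal is correct and follows essentially the same route as the paper's own proof sketch: a nonnegative combination of convex functions ($h_\epsilon$ and $L_\delta$, the latter via Lemma \ref{lemma:smooth_hinge_loss}) composed with affine maps, over an unconstrained domain. The only difference is that you supply an explicit argument for the convexity of the Huber function (via the supremum-of-affine-functions representation or the monotone derivative), where the paper simply cites \cite{boyd2004convex}; this is a harmless and slightly more self-contained addition.
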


\begin{proof}[Sketch]
Problem \ref{opt_prob:primal_huber_smoothed_hinge} is an unconstrained optimization problem whose objective function is comprised of a non-negative sum of convex functions. The Huber function,$h_\epsilon$, is a known convex function of its argument for all $\epsilon>0$ (\cite{boyd2004convex}). The function $L_\delta(t)$ is a convex function as proved in Lemma \ref{lemma:smooth_hinge_loss}. Convexity is preserved under affine transformations.
\end{proof}

\subsection{Grouping examples}

The objective of the grouping of examples is to minimize the maximal loss error for each group. Any error which is not maximal, is unimportant.  Thus the new optimization problem is Problem \ref{opt_prob:primal_huber_smoothed_hinge_group}.

\begin{equation}
\begin{array}{ll}\label{opt_prob:primal_huber_smoothed_hinge_group}
\mathrm{minimize} & \frac{1-\lambda}{d} \sum_{j=1,...d} h_\epsilon(w_j)+ \\
 & \frac{\lambda}{n_+}\sum_{k\in k_+}{max_{i\in G_{k_+}}{\{L_\delta(y^i (w^T x^i+b))\}}} + \\
& \frac{\lambda}{n_-}\sum_{k\in k_-}{max_{i\in G_{k_-}}{\{L_\delta(y^i (w^T x^i+b))\}}} \\
,                                                                              
\end{array}
\end{equation}

where $G_{k_+}, G_{k_-}$ are groups of examples related to the positive and negative classes. Note that the number of elements in each group can be different and that $n_+$ and $n_-$ indicate the number of positive and negative \textit{groups} and not examples. 



Note that the maximum function turns the objective function of Problem \ref{opt_prob:primal_huber_smoothed_hinge_group} into a non-differentiable objective function. One path to handle this issue is to use a smoothed maximum function. However, if we were to use the sub-gradient instead, we could calculate the gradient only for the maximal soft-margin. This considerably reduces the computational load and we recommend using this option.

\subsection{Use case}

In this section we will demonstrate how to use the grouping on an image classification task. The object to detect is a cat. Therefore positive examples are images which contain at least one cat and negative examples are images which do not. The grouping extends to other classification task as well.

Consider the following algorithm for image classification:
\begin{enumerate}
	\item Region proposal algorithm which suggest candidates of an object in an image.
	\item An algorithm which provides a score indicative if the candidate is the desired object or not.
	\item A score per image which is the probability the image contains the object based on the maximal score of the candidates in the image.
\end{enumerate}

With non-end-to-end learning, the second step is divided into a features design stage and classifier stage and the score can be the soft-margin of the classifier. With end-to-end learning such as deep learning some or all of the steps can be combined. We will focus the discussion in this paper on algorithms that maintain at least the separation between the candidates level (steps $1$,$2$) and the image level (step $3$). This separation is typically required even with deep learning when the objects to detect comprise a small portion of the image.

A visualization of the scheme of the algorithm is presented in Figure \ref{fig:animals}. The maximum function is used to transition from the candidate level to the image level since it is enough for one candidate to contain the desired object in order to decide that the image contains the object.  

\begin{figure}[t!]
\centering
\includegraphics[width = \textwidth]{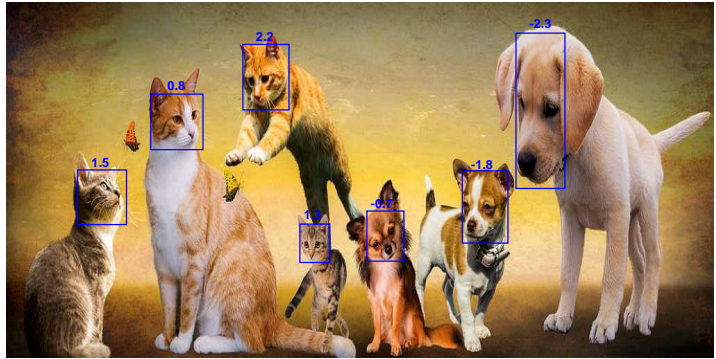}
\caption{The question is if there is (at least) one cat in the image. Consider a region proposal which provides rectangular candidates of animal faces. An algorithm provides a score per candidate. The overall image score is the maximum, which in this example is equal to $2.2$. If this is above a threshold, the image is termed to contain a cat.}
\label{fig:animals}
\end{figure}

Under this scheme, we propose that only the negative examples will be grouped per image, while for the positive examples only the candidate which best fits the desired object will be kept and never grouped with other examples. This suggestion is naturally only for the training phase and the test phase remains the same for any group of examples (e.g. image, video, etc.).

\subsubsection{Why group the negative examples?}

In the proposed image classification framework, all that is important is the maximal score per image. If the maximal score in a negative image will have a higher value than the correct candidate in a positive image, there will be a miss-classification. Figure \ref{fig:group_neg} presents an example to demonstrate this principle with the hinge-loss for simplicity. Consider that the positive image is \ref{fig:pos1N} and the negative image is \ref{fig:neg1N}. This means that one bad error in the negative example will cause an error on the image level, no matter how well the performance is for the rest of the candidates in the negative example. Therefore it is important to minimize the maximal hinge-loss for the negative examples.

Consider the situation where the negative image is \ref{fig:neg2N} instead of \ref{fig:neg1N}. While on \textit{average} the hinge-error loss is higher for \ref{fig:neg2N} compared to \ref{fig:neg1N}, the performance on the image level is better. This can explain why the standard criteria which focuses on minimizing the error on average is not optimal if the final algorithm's success criteria is on the group level.

\begin{figure}[t]
    \centering
    \begin{subfigure}[t]{0.31\textwidth}
        \includegraphics[width=\textwidth]{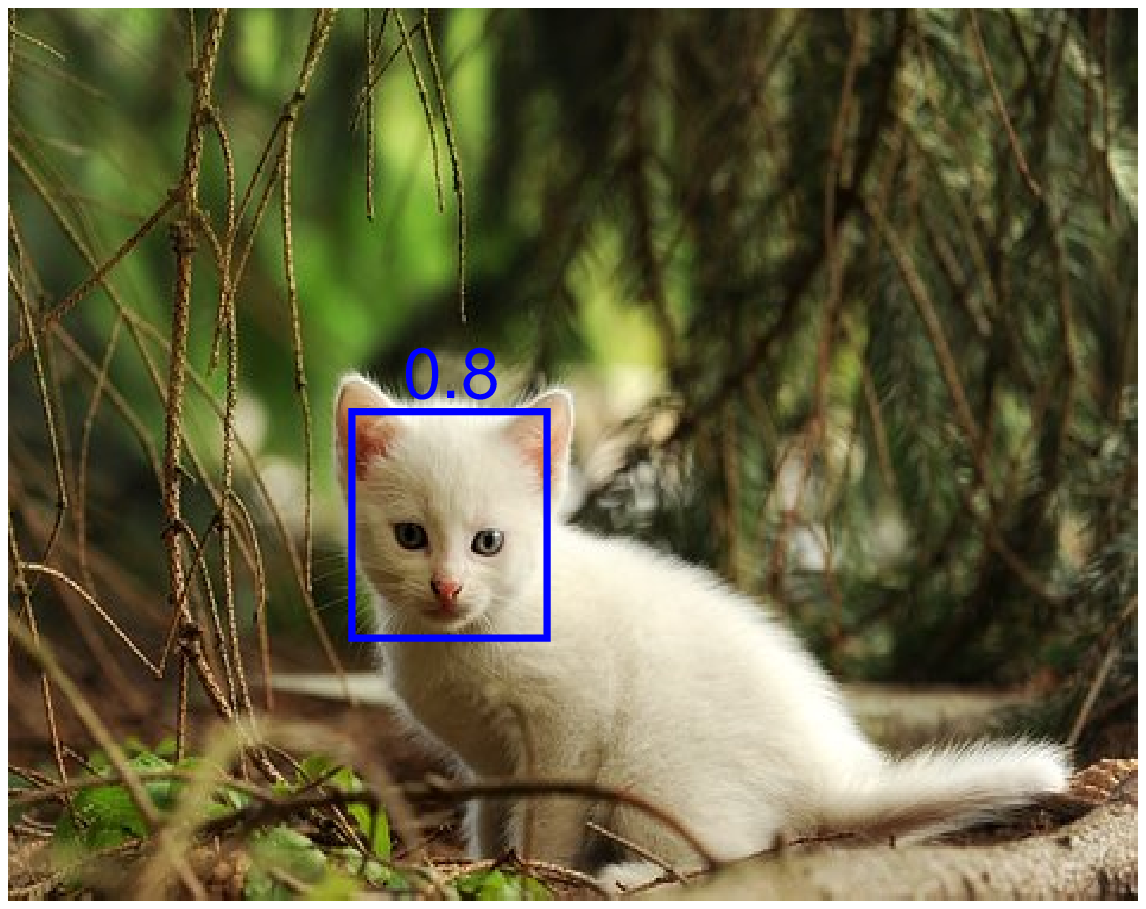}
        \caption{Positive example with 1 candidate with a hinge-loss of $0.2$.}
        \label{fig:pos1N}
    \end{subfigure}
    ~ 
    \begin{subfigure}[t]{0.31\textwidth}
        \includegraphics[width=\textwidth]{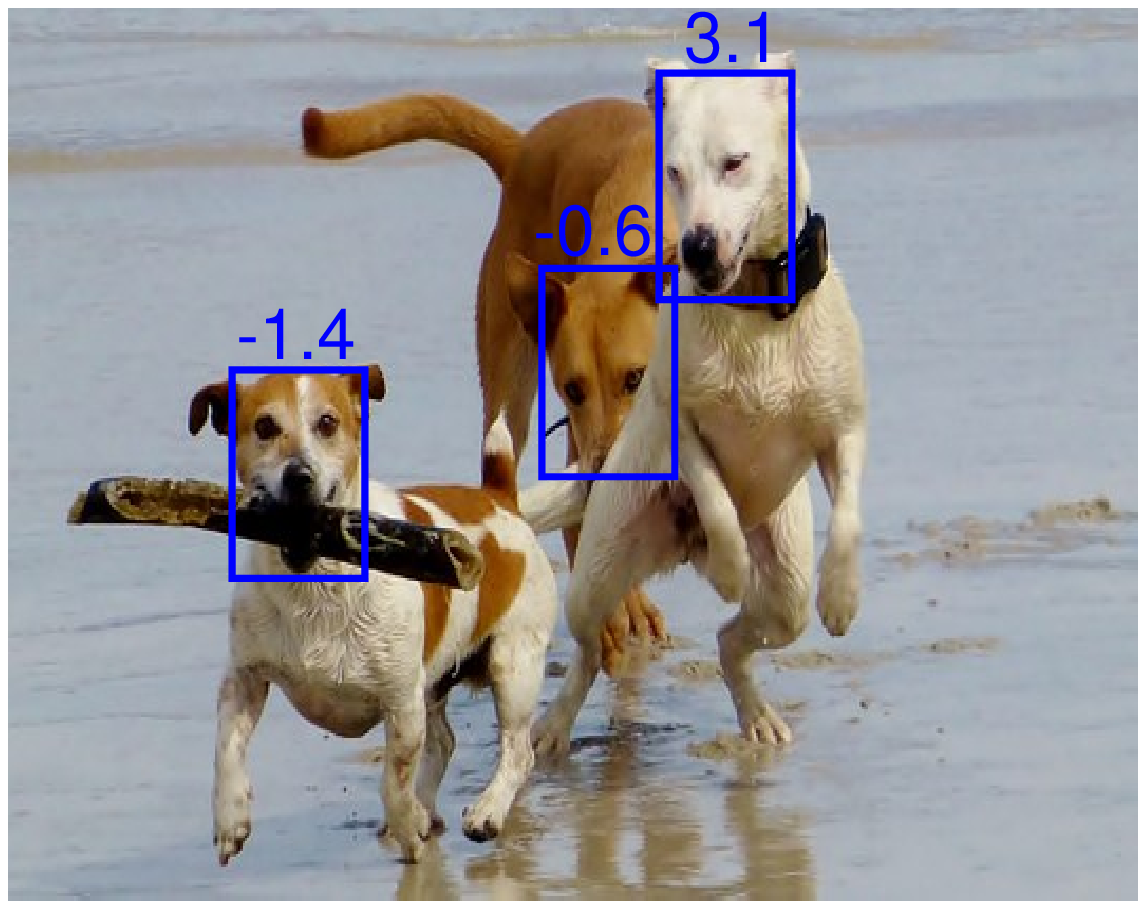}
        \caption{Negative example with 3 candidates with hinge-losses of $0$,$0.4$ and $4.1$ from left to right. The mean hinge-loss is $1.5$ and the max is $4.1$.}
        \label{fig:neg1N}
    \end{subfigure}
    ~ 
    \begin{subfigure}[t]{0.31\textwidth}
        \includegraphics[width=\textwidth]{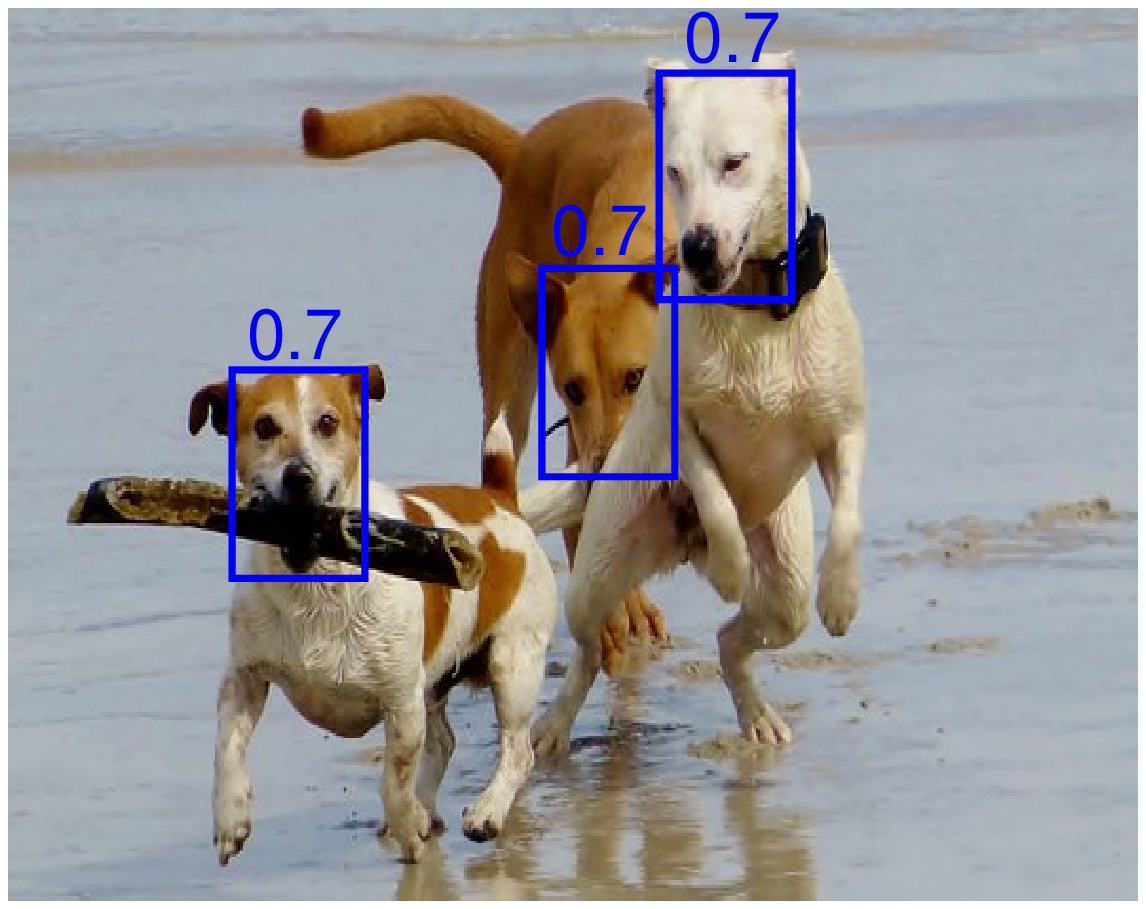}
        \caption{Same negative example as in (b) with hinge-losses $1.7$ for all. The mean and max hinge-loss is $1.7$.}
        \label{fig:neg2N}
    \end{subfigure}
    \caption{Grouping negative examples}\label{fig:group_neg}
\end{figure}

\subsubsection{Why not to group the positive examples?}

The true objective for the positive examples is to maximize the largest score for the positive image. Therefore minimizing the maximal penalty on the training error for a positive example, produces the exact opposite of the desired effect. For example, in Figure \ref{fig:group_pos} there are two positive images. The maximal hinge losses in images \ref{fig:pos1P} and \ref{fig:pos2P} are $1.4$ and $0.7$ respectively. This means that according to this criterion, option $b$ is preferable. However, the objective is that the maximal score of the candidates (preferably the candidate which best fits the object) will have a high score. Minimizing the maximal error for a positive example focuses on increasing the lowest score.

\begin{figure}[t!]
    \centering
    \begin{subfigure}[t]{0.47\textwidth}
        \includegraphics[width=\textwidth]{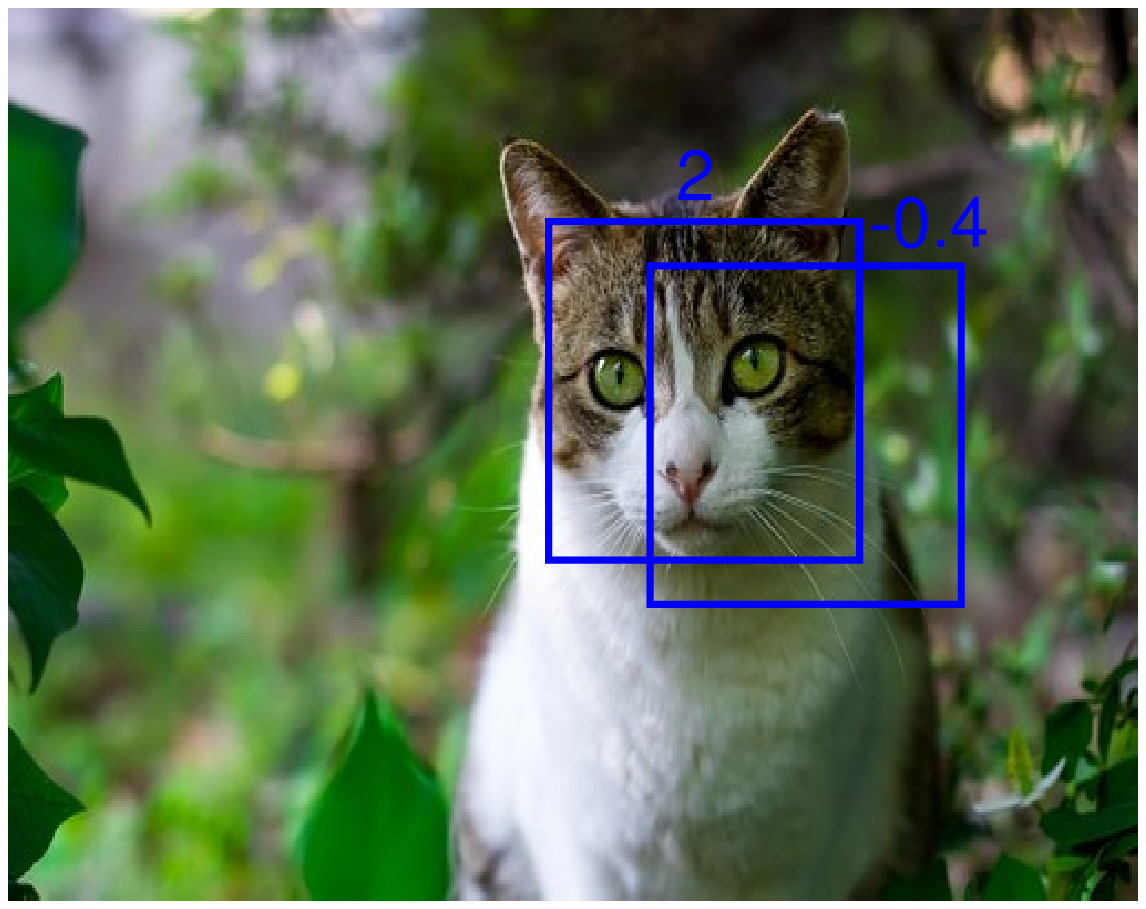}
        \caption{Positive example with 2 candidate with hinge-losses of $0$ and $1.4$.}
        \label{fig:pos1P}
    \end{subfigure}
    ~ 
    ~ 
    \begin{subfigure}[t]{0.47\textwidth}
        \includegraphics[width=\textwidth]{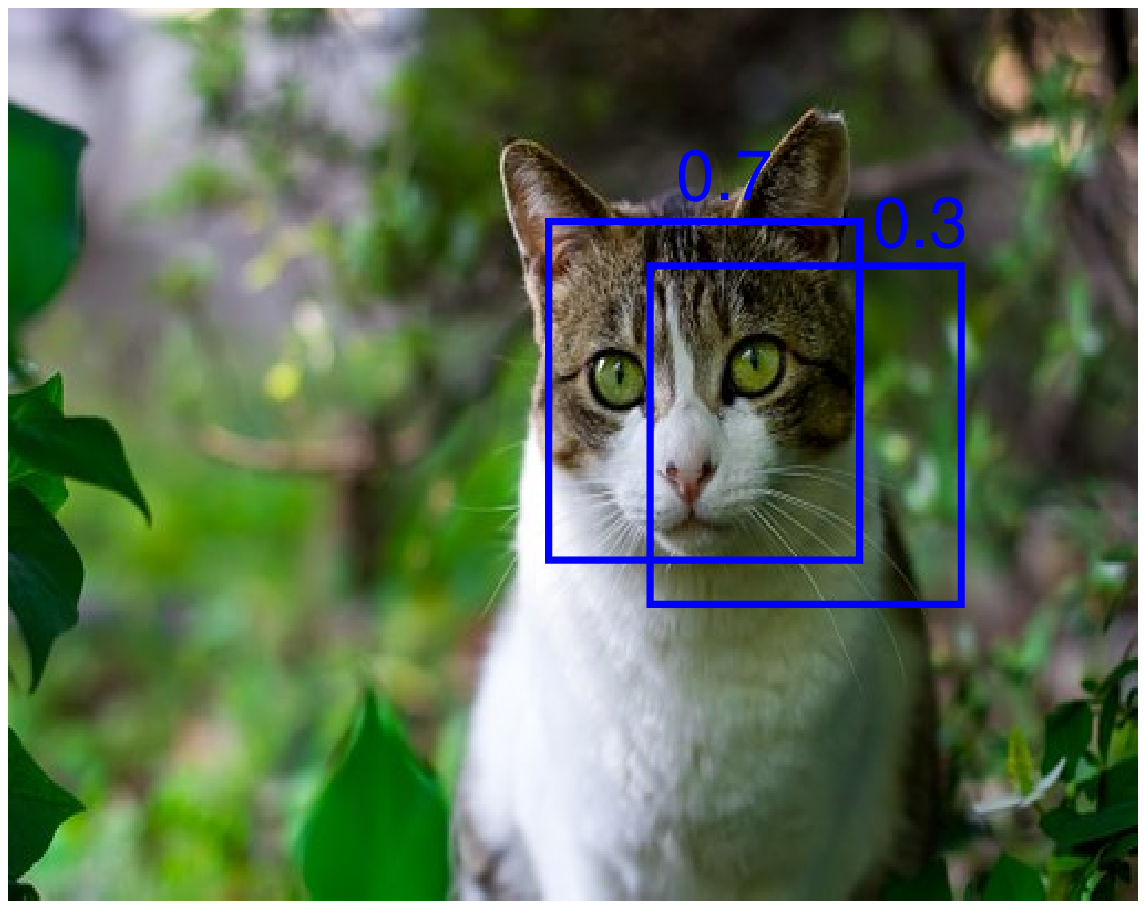}
        \caption{Same positive example as in (a) with hinge-losses of $0.3$ and $0.7$.}
        \label{fig:pos2P}
    \end{subfigure}
    \caption{Why not to group positive examples}\label{fig:group_pos}
\end{figure}


One can try to minimize the average penalty on the training error. This means that all candidates in the group will be treated equally and the optimization will try to increase all of them. While this is better than the previous choice, it still focuses on candidates whose values are not important. For example, in Figure \ref{fig:group_pos} the average hinge-losses in images \ref{fig:pos1P} and \ref{fig:pos2P} are $0.7$ and $0.5$ respectively. This means that according to this criterion, option $b$ is preferable. 

The best optimization criteria is minimizing the minimal penalty on the training error. For example, in Figure \ref{fig:group_pos} the minimal hinge-loss in images \ref{fig:pos1P} and \ref{fig:pos2P} are $0$ and $0.3$ respectively. This means that according to this criterion, option $a$ is preferable. This is the preferable option due to the min function on the group level. However, minimizing the minimum of a set of convex functions is a non-convex optimization problem.

While solving non-convex, even large scale, optimization problems has become common practice in deep learning, this approach has its disadvantages and it is best to refrain from using it there is no significant harm to the global minimum of the non-convex optimization problem. The disadvantages include:

\begin{enumerate}
	\item Only convergence to a local minima is guaranteed.
	\item In order to improve performance, often manual "baby-sitting" is required to monitor the progress of the algorithm.
	\item Performance depends on the initial guess.
\end{enumerate}

In the context of the image classification problem, this paper assumes that a mask is provided to identify the location of the desired object in the positive images. This paper proposes to minimize the penalty error of the candidate which best fits the provided mask. This is important for generalization, since maximizing the score of a candidate which is not on the desired object in the positive image may provide better performance on the image level in the training set. However, this will inevitably diminish performance on the test set. Thus we expect that there is little harm to the global minimum of the non-convex optimization problem.

Furthermore, if it is also important to provide an indication of the location of the reason why the image was deemed to contain the object, this step is also beneficial. Note however, that this will provide the location of just one object, even if the image contains more. This can be overcome easily if a constant threshold is used and an object is deemed to be positive if its score is higher than the aforementioned threshold. On the other hand, this means that the ultimate optimization problem is unrelated to a group and then one should either solve Problem \ref{opt_prob:primal_huber_smoothed_hinge} or other techniques which don't utilize grouping.

\subsubsection{Proposed algorithm}

For negative groups, group the examples and minimize the maximal training error penalty. For positive examples, mark masks of the positive objects and minimize for each group only the training error penalty with the candidate which best fits the manual mask according to an overlap criterion of choice. Thus the algorithm consists of solving Problem \ref{opt_prob:primal_huber_smoothed_hinge_group_use_case}.

\begin{equation}
\begin{array}{ll}\label{opt_prob:primal_huber_smoothed_hinge_group_use_case}
\mathrm{minimize} & \frac{1-\lambda}{d} \sum_{j=1,...d} h_\epsilon(w_j)+\\
 & \frac{\lambda}{n_+}\sum_{k\in k_+^*}\{L_\delta(y^k (w^T x^k+b))\} +\\
 & \frac{\lambda}{n_-}\sum_{k\in k_-}{max_{i\in G_{k_-}}{\{L_\delta(y^i (w^T x^i+b))\}}} \\
,                                                                              
\end{array}
\end{equation}

where $k_+^*$ is an index of groups and indicates the candidate with the maximal overlap to the marked mask per group. Note that for images without sufficient overlap to the marked mask, the images may be omitted from the training set. Also $n_+$ and $n_-$ indicate the number of positive and negative \textit{groups} and not examples. 

For the test phase, the classifier is used on all the candidates and the score per image is the maximal score. Solving Problem \ref{opt_prob:primal_huber_smoothed_hinge_group_use_case} is termed as the Group Classification Machine (GCM) algorithm. Appendix A details the gradient of Problem \ref{opt_prob:primal_huber_smoothed_hinge_group_use_case} for gradient based optimization algorithms (\cite{bertsekas1999nonlinear}).

\subsection{Non linear large scale algorithm}

Linear classifiers typically are insufficient to provide the best possible performance unless the features inputted to them span a domain which separates well between classes. One example of such an exception is in transfer learning where a deep learning system is trained on a large benchmark and the last classification layer is removed. Then the values of the last layer serve as the features of a linear classifier for a different problem with relatively a few examples. However, in most cases we would like the ability to use non-linear classifiers. Otherwise, the advantage gained by optimizing for groups may be lost due to this limitation.

In order to achieve non-linearity, via representer theorem a kernel classifier is typically used. In cases where the number of training examples is extremely large, this is problematic due to computational speed and computer memory. There is a significant amount of work on computational techniques aimed at solving large scale (in the sense of the examples and not the features) SVMs (\cite{achlioptas2002sampling}, \cite{tsang2005core}, \cite{blum2006random}, \cite{zanni2006parallel}, \cite{loosli2007training}, \cite{bottou2007large}, \cite{rahimi2008random}, \cite{wang2011trading}, \cite{vedaldi2012efficient}, \cite{zhang2012scaling}, \cite{li2012chebyshev}, \cite{yang2012nystrom}, \cite{Mu_2014_CVPR}, \cite{tu2016large}, \cite{lu2016large}, \cite{ionescu2017large}).

We propose to perform a non-linear transformation of features into a new set of features and maintain the linear classifier. Thus, the classifier will be non-linear in relation to the original classifier and the convexity of the optimization problem will remain. Although this step entails increasing the number of variables to optimize, the increase is related to the complexity of the classifier needed and not to the number of training examples. 

One example is to transform the input features to all the polynomial factors up to a certain degree. This results in a polynomial classifier in relation to the original features. In order to exemplify, consider two features $x_1,x_2$ with a maximal degree of $3$. This results in 9 features $\{x_1,x_2,x_1^2,x_2^2,x_1^3,x_2^3,x_1 x_2,x_1 x_2^2,x_1^2 x_2\}$. This polynomial transform is an exponential increase in the number of features, however, if the number of features is too large to optimize, one can prune the number of combinations or lower the maximal degree.
The advantage of this approach is that any non-linear transformations can be performed and the classification problem will remain convex.

\subsection{Comparison to MI-SVM}

The MI-SVM algorithm \cite{andrews2003support} has the following form:
\begin{equation}
\begin{array}{ll}\label{opt_prob:MI_SVM_original}
\mathrm{minimize} & w^Tw+C (\sum_{i\in i_+}\xi_{i} + \sum_{i\in i_-}\xi_{i}) \\
\mbox{subject to} & \xi_{i} \geq \mathbf{1} + (X^{i} w + b\mathbf{1}) ,\hspace{0.2cm} \forall{i\in i_-} \\
\mbox{subject to} & \xi_{i} \geq \mathbf{1} - (x^{s_{i}} w + b\mathbf{1}), \hspace{0.2cm} \forall{i\in i_+} \\
                  & \xi \geq 0,                                                                                   
\end{array}
\end{equation}

where the variables are $w\in \mathbb{R}^{d}$, $b\in \mathbb{R}$, $\xi\in \mathbb{R}^{n}$, and the selector variable $s\in \mathbb{Z}^{n_+}$ which chooses which candidate has the highest soft-margin for each group. $X^{i}$ is the matrix of all the feature vectors in a group and $x^{s_{i}}$ is the feature vector of the group with the largest softmargin.

This is a mixed integer optimization problem \cite{boyd2004convex} and in \cite{andrews2003support} it was proposed to iteratively guess which feature vector in each group will represent the group and optimize Problem \ref{opt_prob:MI_SVM_original} after setting $s$. The first guess is the average of the feature vectors for each positive group. The rest of the steps use the previous step's soft-margins to determine which is the maximal in each group. The stopping criteria is when the selector variable $s$ doesn't change.

The main differences between MI-SVM and GCM are:
\begin{enumerate}
\item In GCM there is no need for the selector variable. The candidate which should receive the highest score in a positive group is known thanks to the annotation. While this has some annotation cost, this can lead to a significant improvement in performance especially when there aren't many positive groups and\textbackslash or if the positive groups consist of only a small percent of positive candidates.
\item GCM is an unconstrained optimization problem which can be optimized for very large scale problems. This is critical when each group consists of many candidates. Without the ability to handle a large scale of candidates, the number of group for training would be very limited.
\item GCM solves the global optimization problem (it focuses on the candidate the supervisor wants to represent the positiveness of the group) of the MI-SVM with one convex optimization problem as opposed to the MI-SVM which requires solving a sequence of convex optimization problems without guarantee of the global solution.
\end{enumerate}

Thus GCM can't be considered as a special case of the MI-SVM algorithm since it requires stronger supervision than MI-SVM and has the potential to improve the performance with lower computational cost.

\section{Experiments}\label{Section:Experiments}

The performance of the proposed GCM algorithm in relation to its origin, the SVM classifier, and the MI-SVM will be demonstrated on the image classification task of polyp detection in capsule endoscopy data. The results are provided for the linear classifier in order to fairly compare between the algorithms, since the non-linear versions of the SVM, MI-SVM and GCM algorithms are not identical. 

Any gradient descent algorithm can be used to optimize Problem \ref{opt_prob:primal_huber_smoothed_hinge_group_use_case}. In this paper, the quasi-newton limited FBGS with Armijo line-search was used \cite{bertsekas1999nonlinear}.

For both algorithms cross validation was used to tune the hyper-parameter which trades-off between generalization and training errors ($C$ for SVM, MI-SVM and $\lambda$ for GCM). The parameter of the Huber function $\epsilon$ was set to $1$ and the parameter of the soothed hinge-loss $\delta$ was set to $0.5$. From previous experiments we observed that the performance does not change significantly for different values of $\epsilon$ and $\delta$, so in order not increase significantly the required cross validation, these values were fixed beforehand.

\subsection{The challenge of polyp detection in capsule endoscopy}

Capsule Endoscopy for the colon is an emerging market with the potential to become a screening tool for detecting colorectal cancer. The patient swallows a pill that includes cameras which provide a video of the digestive system. If a significant polyp is detected with the capsule, the patient will be referred to Colonoscopy to have it treated.
Polyps are typically focal pathologies which can appear in only a few frames on the screen. Thus they can be missed by the physician. Currently physicians are provided videos and decide if to refer to Colonoscopy for treatment or not. We utilized polyp detectors which helped us to omit frames while maintaining the diagnostic yield. Figure \ref{fig:given_imaging} demonstrates the capsule and how it is used.

\begin{figure}[t!]
    \centering
    \begin{subfigure}[t]{0.47\textwidth}
        \includegraphics[width=\textwidth]{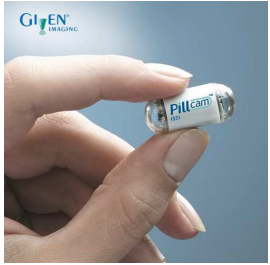}
        \caption{A person swallows a colon PillCam.}
        \label{fig:given1}
    \end{subfigure}
    ~ 
    ~ 
    \begin{subfigure}[t]{0.47\textwidth}
        \includegraphics[width=\textwidth]{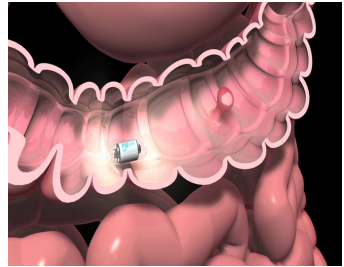}
        \caption{The capsule on the lookout for polyps.}
        \label{fig:given2}
    \end{subfigure}
    \caption{The capsule travels naturally through the digestive system, transmitting images of the GI tract.}\label{fig:given_imaging}
\end{figure}

Polyps may be missed, since many healthy parts of the gastrointestinal tract are similar to them. Foremost are the natural folds in the colon, which can be very similar to polyps in appearance and which are present in almost every frame. Figure \ref{fig:polyps} demonstrates how polyps look like and how difficult it is differentiate between them and healthy tissue. The decision that needs to be made is if to display the acquired frame to the user or not. Therefore this is an image classification problem.

\begin{figure}[t!]
    \centering
    \begin{subfigure}[t]{0.3\textwidth}
        \includegraphics[width=\textwidth]{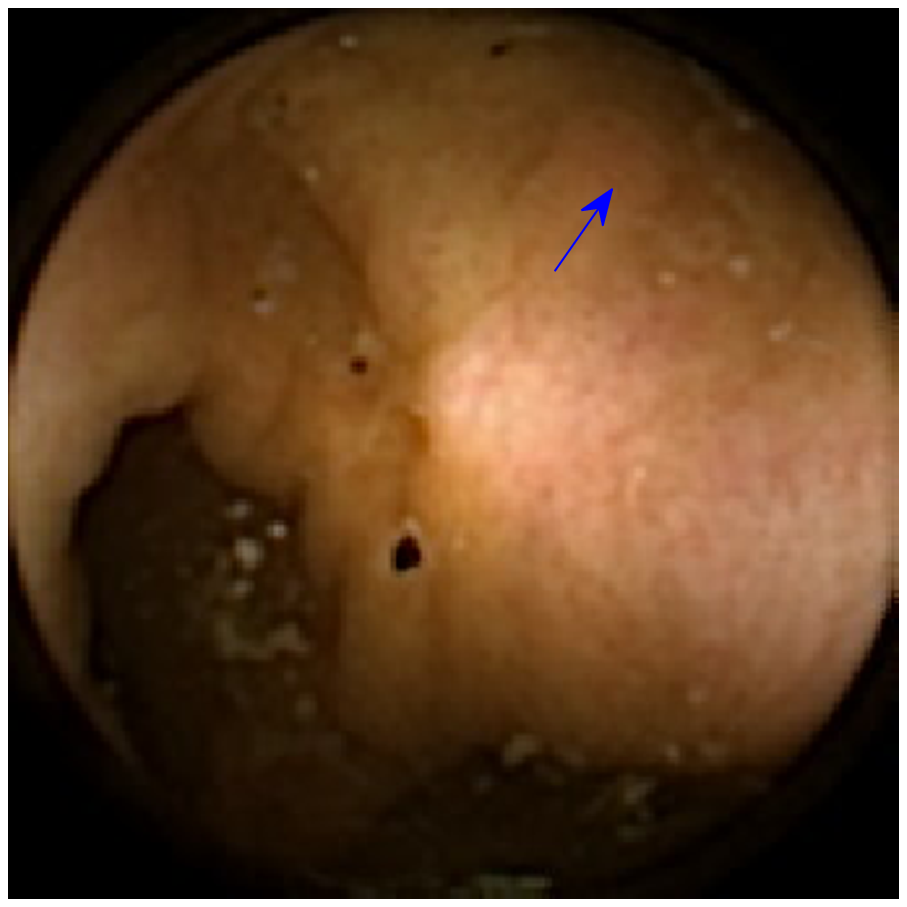}
        \label{fig:polyp1}
    \end{subfigure}
    ~ 
    \begin{subfigure}[t]{0.3\textwidth}
        \includegraphics[width=\textwidth]{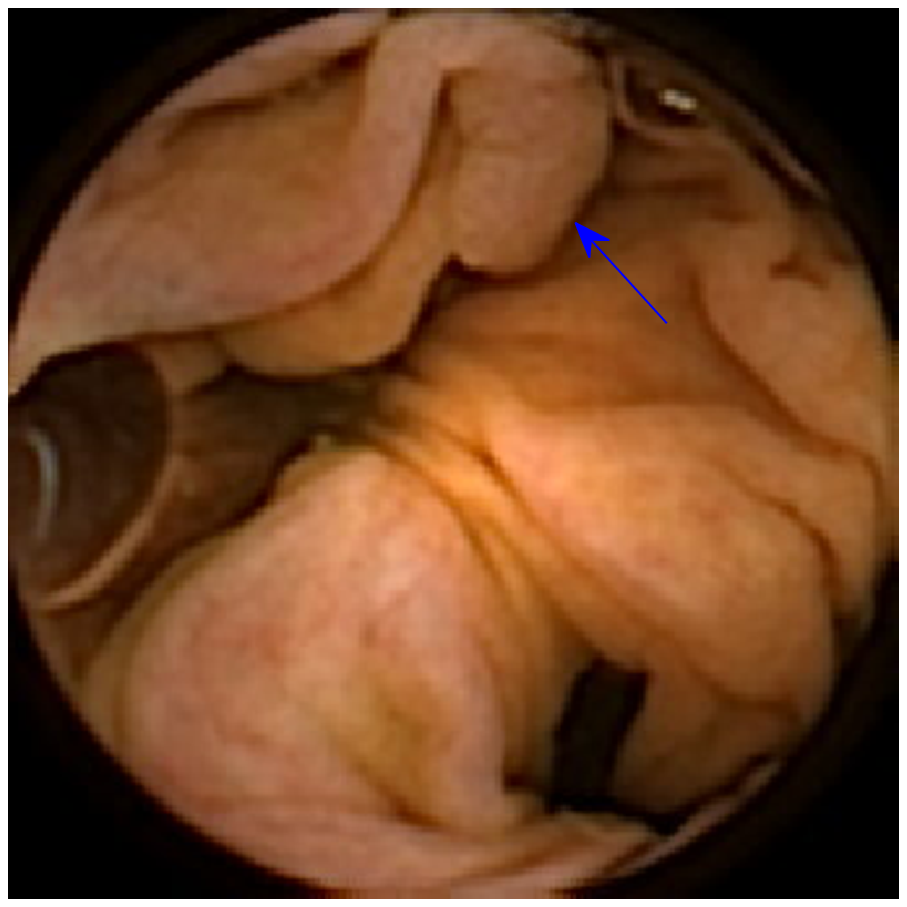}
        \label{fig:polyp2}
    \end{subfigure}
    ~ 
    \begin{subfigure}[t]{0.3\textwidth}
        \includegraphics[width=\textwidth]{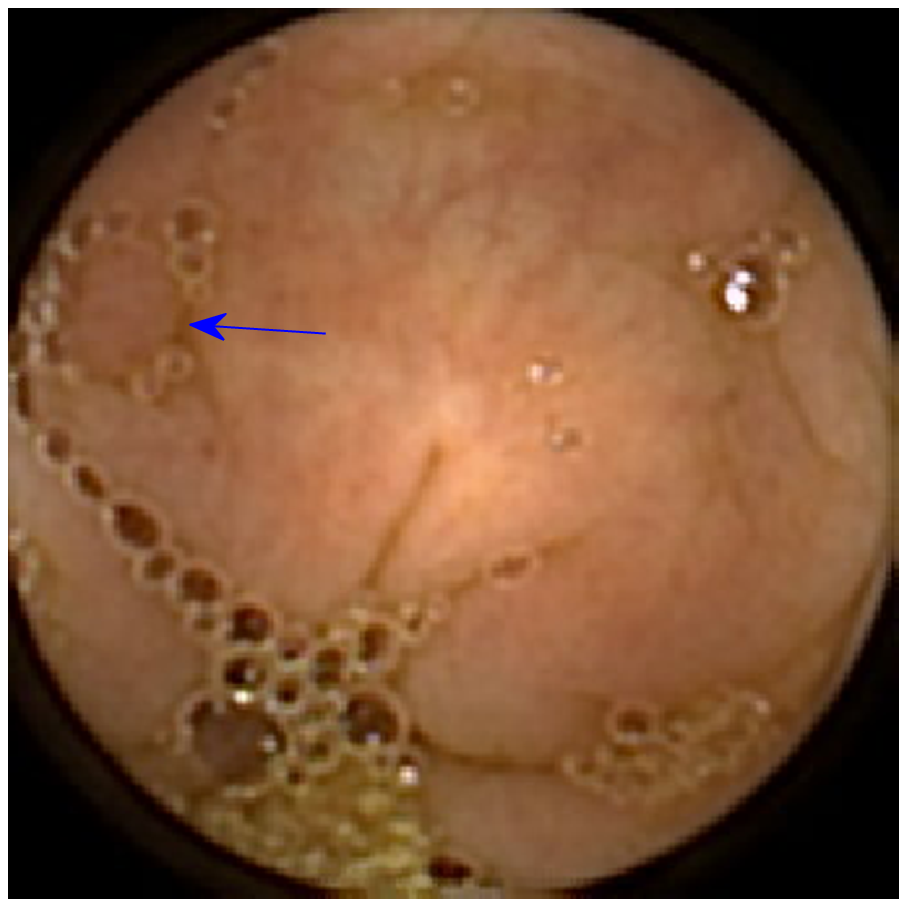}
        \label{fig:polyp3}
    \end{subfigure}
    \caption{Examples of images of the colon with arrows indicating polyps.}\label{fig:polyps}
\end{figure}

In this work there was a segmentation phase which resulted in $\sim 200$ candidates per image on average. For each candidate $13$ high level features were designed to best differentiate between the positive candidates and the negative candidates with an SVM classifier.

\subsection{Results}\label{Subsection: Results}

Table \ref{table:datasets} details the training and test sets used to compare the SVM, MI-SVM and GCM algorithms. While the test set is identical for all algorithms, the negative training set consists of $20000$ times more data for the GCM algorithm compared to the SVM algorithm. The reason the training set size couldn't be large for the standard SVM algorithm was due to memory constraints caused by the need to store the kernel matrix. For the GCM algorithm, there was no such limitation. Note that a test with the same limited negative training set for the GCM algorithm is described later on in Figure \ref{fig:per_cand_vs_per_group_training_size}.

\begin{table}
\centering
\begin{tabular}{ |c|c|c|c|c|c| } 
\hline
Classifier & & SVM & MI-SVM & GCM \\
\hline
\multirow{2}{4em}{Training} & Positive & 293 (100 polyps) & 5000 (25 polyps)  & 293 (100 polyps) \\ 
                            & Negative & 5000             & 5000              & 1e8 \\ 
\hline
\multirow{2}{4em}{Test}     & Positive & 362 (115 polyps) & 362 (115 polyps) & 362 (115 polyps) \\ 
                            & Negative & 326e6            & 326e6            & 326e6 \\ 
\hline
\end{tabular}
\caption{Training and test sets.}
\label{table:datasets}
\end{table}

The MI-SVM algorithm had the same negative examples for training as did the SVM algorithm. However it had many more examples in order to work on the group level for the positive examples. Thus it was provided with a quarter of all the available positive groups and for each group it received all of the candidates. Due to the structure of the optimization problem, no more positive training examples could be provided.

Figure \ref{fig:per_cand} details the performance with optimization on the candidate level. This means that all the positive and negative examples were treated equally without regard to the source of their images. Both versions of the GCM algorithm outperform the SVM and MI-SVM algorithm. Note that the GCM without grouping achieves the better performance. This is because the success criteria for this graph is the candidate level and not the image level. Note that the MI-SVM algorithm was not designed to optimize performance on the candidate level, so this can explain why it's performance is lower than the SVM algorithm.

Figure \ref{fig:per_group} details the performance with optimization on the image level. Again both versions of the GCM are better than the SVM and MI-SVM, but this time the GCM with grouping outperforms the version without grouping. This is because the success criteria in this figure is the image level. The MI-SVM algorithm has inferior performance even compared to the SVM algorithm despite the fact that it is optimized for the group level. This can be explained due to the low number of examples on the group level and the ambiguity of having to choose which candidate are positive for each positive group. For the SVM and GCM algorithm the positive candidates were identified by design. The combination of a low number of groups and less than $1\%$ positive candidates per positive group may be the cause of the difference between the MI-SVM to the rest of the algorithms.

\begin{figure}[t!]
    \centering
    \begin{subfigure}[t]{0.47\textwidth}
        \includegraphics[width=\textwidth]{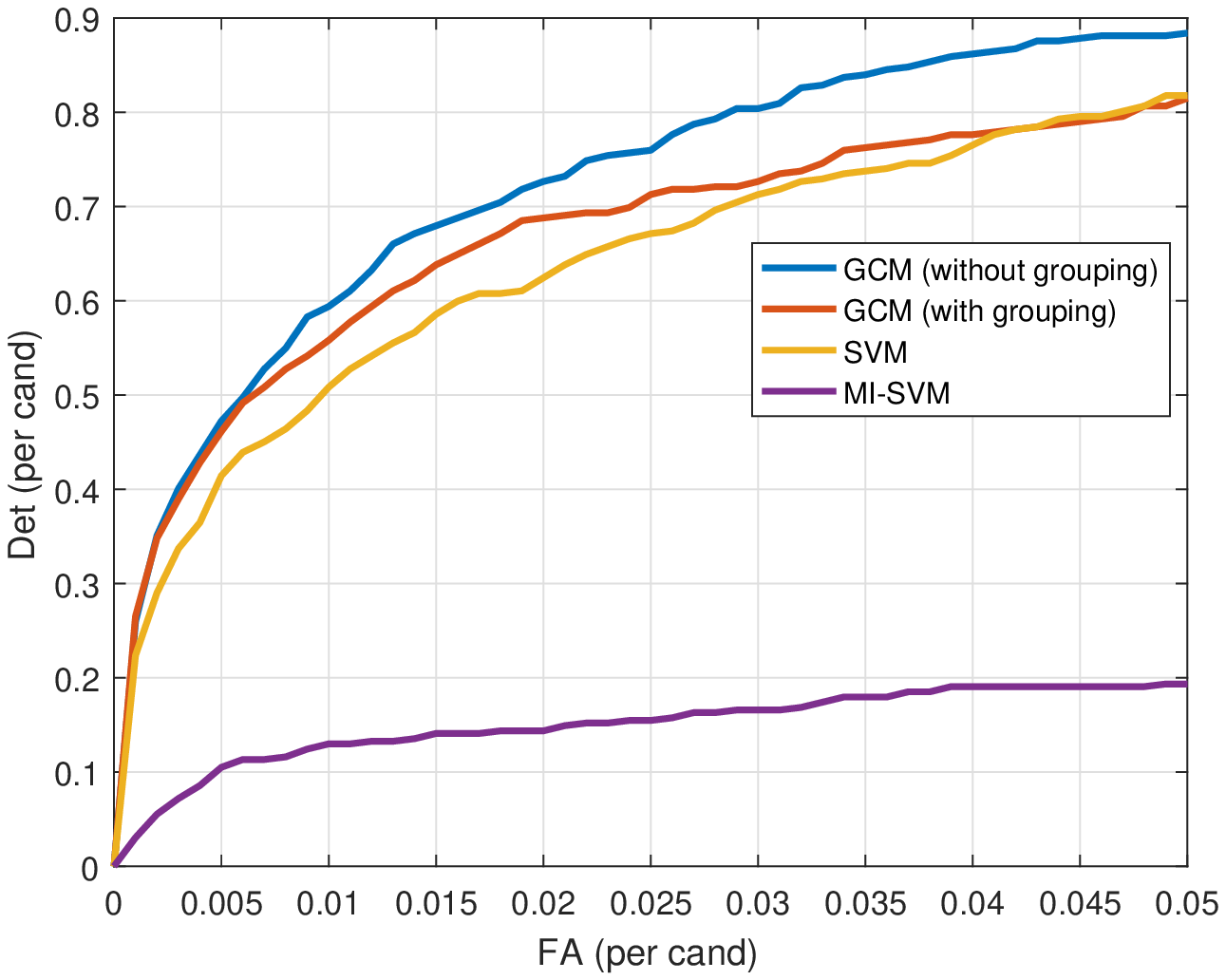}
        \caption{Performance per candidate.}
        \label{fig:per_cand}
    \end{subfigure}
    ~ 
    ~ 
    \begin{subfigure}[t]{0.47\textwidth}
        \includegraphics[width=\textwidth]{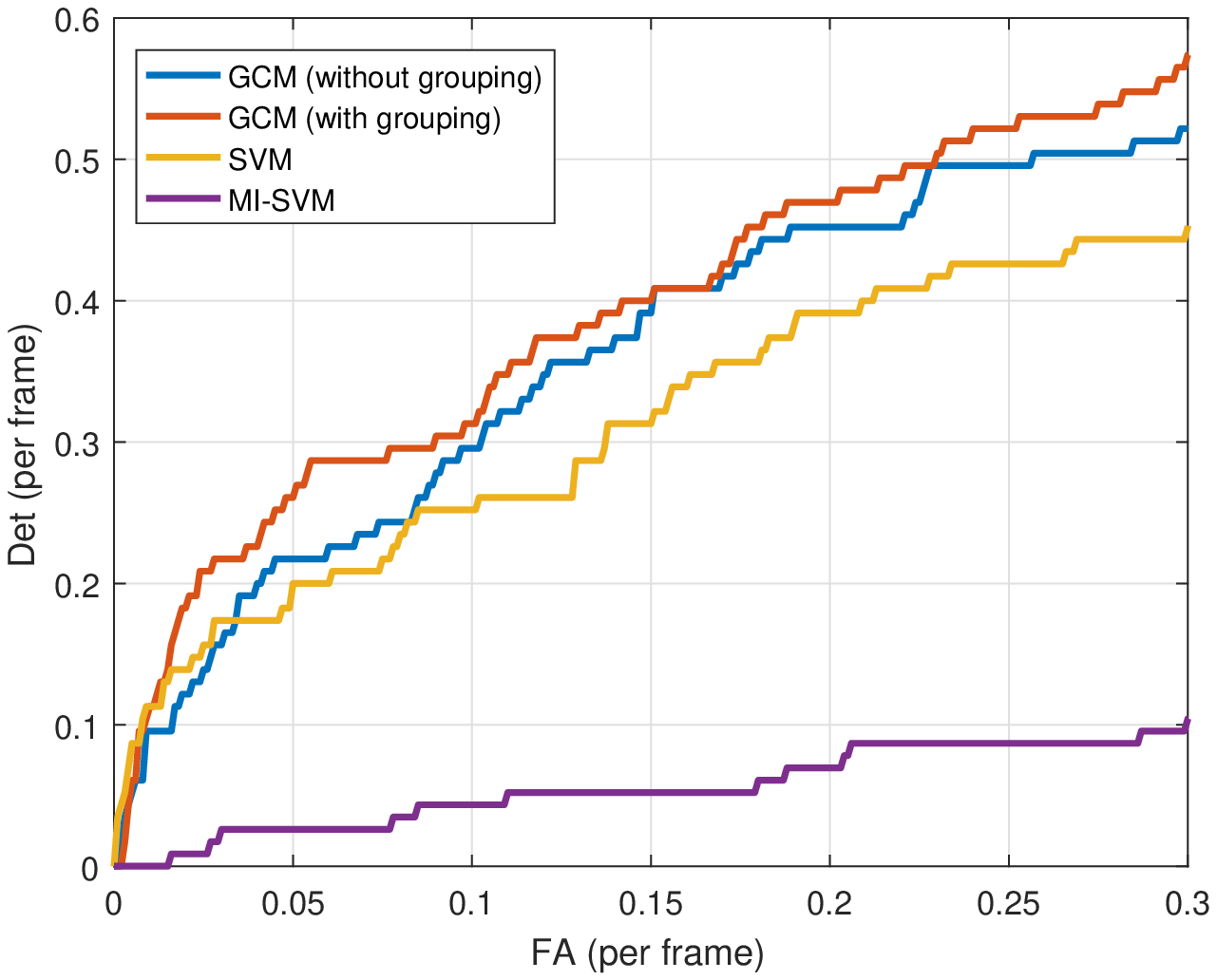}
        \caption{Performance per group.}
        \label{fig:per_group}
    \end{subfigure}
    \caption{Comparison of the performance in the candidate and group level}\label{fig:per_cand_vs_per_group}
\end{figure}

The conclusions from Figure \ref{fig:per_cand_vs_per_group} are:
\begin{enumerate}
	\item GCM may have better performance than SVM and MI-SVM, regardless of the grouping option.
	\item One should choose the grouping option of the GCM according to the success criteria.
\end{enumerate}

Another important aspect of the algorithm is the dependency of its performance relative to the training set size. Also one may think based on Figure \ref{fig:per_cand_vs_per_group} that the improved performance compared to the SVM algorithm is only due to the significantly larger number of negative training examples. Figure \ref{fig:per_cand_training_size} compares the performance of the GCM without grouping on the candidate level. It is clear that the GCM is better than the SVM even with the same limited training set which consists of only $5000$ negative training examples. This means that the objective function with the different penalty terms on the generalization (Huber function) and training errors (smoothed hinge-loss function) leads to better performance in itself on this dataset. However, there is no noticeable increase in the performance for the GCM without grouping on the candidate level by increasing the negative examples from $5000$ to $1e8$.

\begin{figure}[t!]
    \centering
    \begin{subfigure}[t]{0.47\textwidth}
        \includegraphics[width=\textwidth]{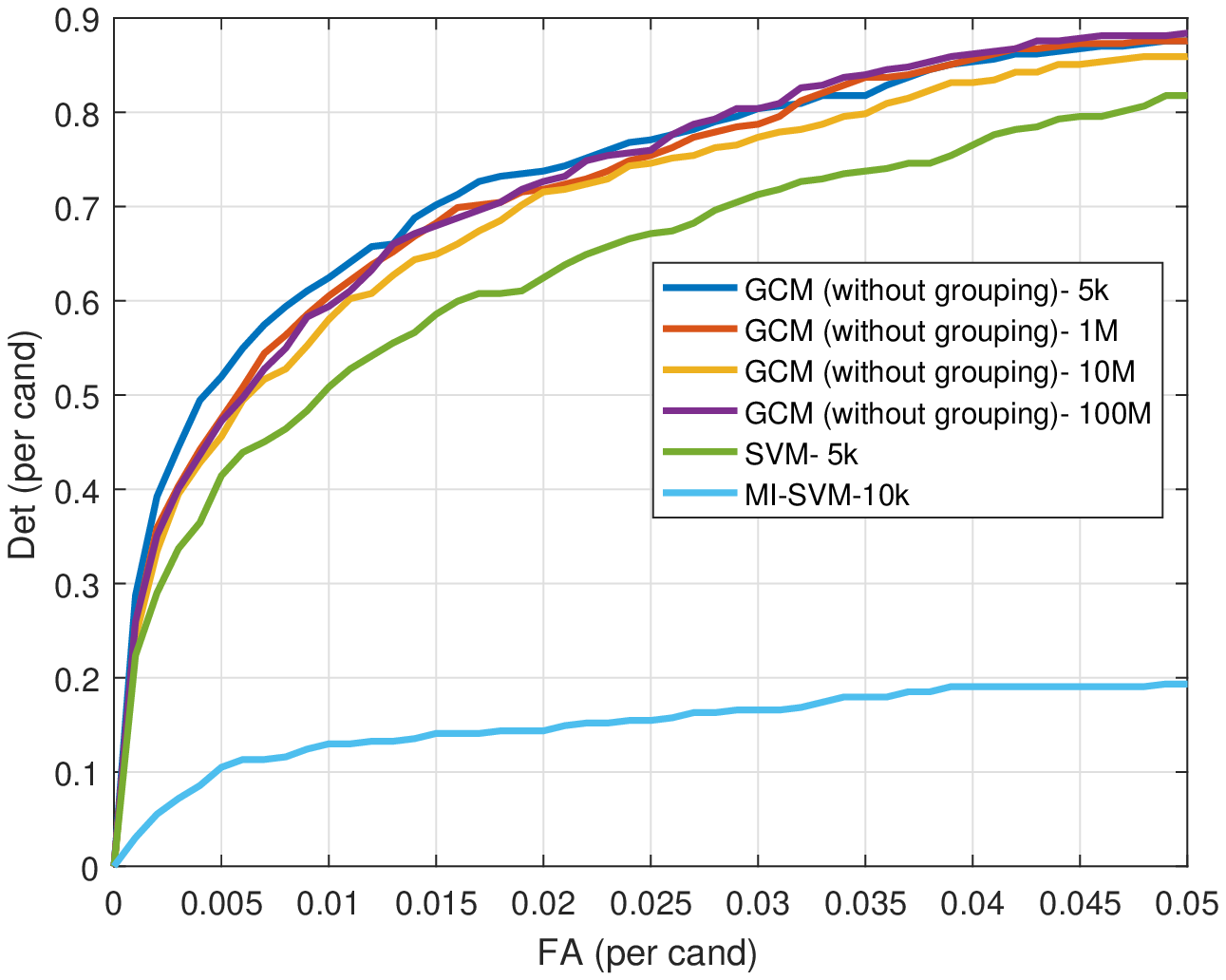}
        \caption{Performance per candidate dependency on training set size.}
        \label{fig:per_cand_training_size}
    \end{subfigure}
    ~ 
    ~ 
    \begin{subfigure}[t]{0.47\textwidth}
        \includegraphics[width=\textwidth]{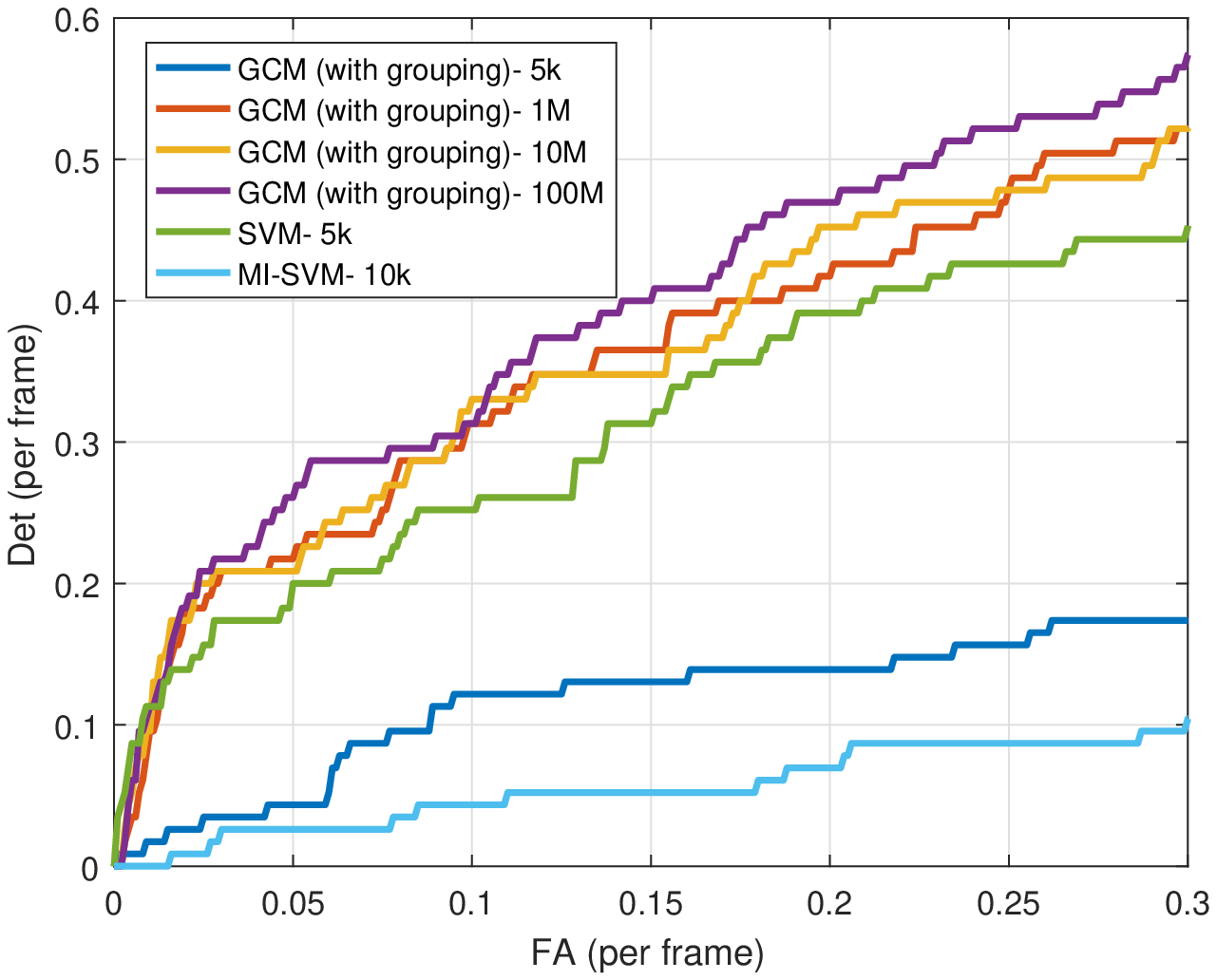}
        \caption{Performance per group dependency on training set size.}
        \label{fig:per_group_training_size}
    \end{subfigure}
    \caption{Comparison of performance in the candidate and group level as a function of the training set size}\label{fig:per_cand_vs_per_group_training_size}
\end{figure}

On the other hand, from Figure \ref{fig:per_group_training_size} it is clear that there is a benefit from increasing the number of negative training examples dramatically. Note that with only $5000$ examples, the grouping GCM is worse than the SVM classifier. This is due to the fact that it utilizes only $25$ images since there are roughly $200$ candidates per image.
Another interesting observation from Figure \ref{fig:per_group_training_size} is that there is still a significant gap between the GCM and the MI-SVM algorithm even when the training set for the GCM is limited as it is for the MI-SVM. This gap can be explained by the certainty of the GCM of which is candidate should have the maximal soft-margin in the positive groups.

\subsection{Why aren't there more experiments and comparisons to other MIL algorithms?}\label{Subsection: more experiments}

The framework of this paper is \textit{different} from the MIL framework where only the group labels are available. In this paper, with very little annotation effort, the algorithm has access to a reliable estimate of which candidate within a positive group in the training set makes the group positive.
This information is utilized by the GCM algorithm and for all the MIL algorithms, this information is part of the problem which they try to solve. Thus inherently the performance of the GCM algorithm will be better than that of MIL algorithms. 
This is equivalent to reducing the positive groups in MIL to just the single positive example.

One popular MIL algorithm, MI-SVM, was compared to GCM and indeed the performance of the latter was superior. It is obvious that this will be the same for the rest of the MIL algorithms.

There are no standard datasets where the positive candidate in a positive group is known. Once such a dataset is available, it easy to compare to MIL algorithms. However, the reverse is not possible. If only the group information is known, then the GCM can't be applied.

One alternative proposed in \cite{vanwinckelen2016instance} was to take standard datasets which don't have group information and have many more negative examples than positive and synthetically group them together. While this is an clever idea, this would mean in our case that the GCM would be provided exactly the right positive candidate in the positive groups as opposed to estimated one based on annotation as we presented. This would make the gap even larger compared to MIL algorithms and make the comparison even more unfair to MIL algorithms.

\section{Conclusions and future work}\label{Section:Summary}

In this paper a novel grouping technique of examples was presented which preserves convexity of the optimization problem. The benefit of utilizing the grouping option was demonstrated on an image classification problem of detecting polyps in capsule endoscopy images. 

While a non-linear extension has been introduced, even the linear classifier can be used for transfer learning with a similar dataset but when the desired data doesn't have enough (positive) examples to fine-tune the last few layers.
In future work we will design non-linear transforms on the data in order to perform end-to-end learning directly.

We recommend optimizing the non-grouping option when there is no meaning for groups in the classification task since it may outperform the classical SVM primal objective function and it is a smoother objective function.

The GCM demonstrated that while it requires more supervision than in a MIL setting, this typically requires fast and inexpensive annotation that can lead to significantly better performance with lower computational cost and global convergence to the positive candidates determined by the human supervisor. This may be especially true in cases where there is a relatively small number of positive groups (e.g. most problems in the medical domain) or when the percentage of positive candidates in positive groups is very small (e.g. less than $1\%$).






\appendix
\section*{Appendix A.}
\label{app:theorem}



In this appendix we detail the gradient of the objective function of Problem \ref{opt_prob:primal_huber_smoothed_hinge} which is the non-grouping option. Then we explain how to shift to a subgradient for Problem \ref{opt_prob:primal_huber_smoothed_hinge_group_use_case} which is the proposed optimization problem to solve for groups.

The gradient of $w$ is:

\begin{equation}
\begin{array}{ll}\label{gradient_w}
\nabla_w f_0(w,b)= & \frac{(1-\lambda)}{d} \nabla h_\epsilon(w)-\\
                   & \frac{\lambda}{n_+}\sum_{i\in i_+ \cap \in i_A}{x^i} +\\
                   & \frac{\lambda}{n_-}\sum_{i\in i_- \cap \in i_A}{x^i}+\\
                   & \frac{\lambda}{2\delta n_+}\sum_{i\in i_+ \cap \in i_M}(w^T x^i+b-1){x^i}+\\
                   & \frac{\lambda}{2\delta n_-}\sum_{i\in i_- \cap \in i_M}(w^T x^i+b+1){x^i}\\																		
,                                                                              
\end{array}
\end{equation}

where $i_A$ is the index of examples $x^i$ for which $y^i (w^T x^i+b)<1-2\delta$ and $i_M$ is the index of examples $x^i$ for which $1-2\delta\leq y^i (w^T x^i+b)<1$.

The gradient of $b$ is:

\begin{equation}
\begin{array}{ll}\label{gradient_b}
\nabla_b f_0(w,b)= & \frac{-\lambda}{n_+}\sum_{i\in i_+ \cap \in i_A} 1+ \\
                   & \frac{\lambda}{n_-}\sum_{i\in i_- \cap \in i_A} 1+\\
                   & \frac{\lambda}{2\delta n_+}\sum_{i\in i_+ \cap \in i_M}(w^T x^i+b-1)+\\
                   & \frac{\lambda}{2\delta n_-}\sum_{i\in i_- \cap \in i_M}(w^T x^i+b+1)\\                                                                             
\end{array}
\end{equation}

The sub-gradients of the objective function of Problem \ref{opt_prob:primal_huber_smoothed_hinge_group_use_case} are the same as of Problem \ref{opt_prob:primal_huber_smoothed_hinge}, except that the active set is the subset of $i_A$ whose argument is the maximal per group. Namely, $i_A$ is the index of examples $x^i$ for which $y^i (w^T x^i+b)<1$ and $y^i (w^T x^i+b)$ are the maximal values of per group of indices. It is beneficial to work with sub-gradients as opposed to using a smoothed approximation of the max function since this entails much smaller summation (proportional to the number of training examples divided by the number of groups).

\clearpage
\bibliography{mybibfile}{}
\clearpage
\end{document}